\documentclass[11pt]{article}
\usepackage{iclr2022_conference}
\usepackage[utf8]{inputenc}
\usepackage[english]{babel}
\usepackage{amsmath}
\usepackage{subcaption}
\usepackage{bbm}
\usepackage{amsthm}
\usepackage{amsmath} 
\usepackage{mathrsfs}
\usepackage{amssymb}  
\usepackage{dsfont}
\usepackage{graphicx}
\usepackage{url}
\usepackage{xcolor}
\usepackage{xspace}
\usepackage{booktabs}
\usepackage[matnotit]{echodefs}
\usepackage{wrapfig,lipsum}
\usepackage{stackengine}
\usepackage{enumitem}
\usepackage{siunitx}

\newcommand{\walkpool}{\textsc{WalkPool}\xspace}

\usepackage{marginnote}
\setlength{\marginparwidth}{2.3cm}
\definecolor{WowColor}{rgb}{.75,0,.75}
\definecolor{MildlyAlarming}{rgb}{0.85,0.25,0.1}
\definecolor{SubtleColor}{rgb}{0,0,.50}

\newcommand{\UPDATE}[1]{\textcolor{WowColor}{{#1}}}

\newcommand{\first}[1]{\textbf{#1}}
\newcommand{\second}[1]{\underline{#1}}
\newcounter{margincounter}
\newcommand{\displaycounter}{{\arabic{margincounter}}}
\newcommand{\incdisplaycounter}{{\stepcounter{margincounter}\arabic{margincounter}}}
\newcommand{\fTBD}[1]{\textcolor{SubtleColor}{$\,^{(\incdisplaycounter)}$}\marginnote{\tiny\textcolor{SubtleColor}{ {\tiny $(\displaycounter)$} #1}}}

\newcommand{\ra}[1]{\renewcommand{\arraystretch}{#1}}

\usepackage[normalem]{ulem}

\renewcommand{\fTBD}[1]{}
\renewcommand{\UPDATE}[1]{#1}
\iclrfinalcopy

\title{Neural Link Prediction with Walk Pooling}

\author{Liming Pan\thanks{These two authors have equal contribution.} \\
School of Computer and Electronic Information,\\
Nanjing Normal University,\\
210023 Nanjing, China. \\
\texttt{pan.liming@njnu.edu.cn}
\And
Cheng Shi\textsuperscript{*} \& Ivan Dokmanić \thanks{To whom correspondence should be addressed.}\\
Departement Mathematik und Informatik,\\
Universität Basel,\\
4051 Basel, Switzerland. \\
\texttt{$\{$firstname.lastname$\}$@unibas.ch}
}
\begin{document}
\maketitle
\vspace{-4mm}
\begin{abstract}
    \vspace{-2mm}
    Graph neural networks achieve high accuracy in link prediction by jointly leveraging  graph topology and node attributes. Topology, however, is represented indirectly; state-of-the-art methods based on subgraph classification label nodes with distance to the target link, so that, although topological information is present, it is tempered by pooling. This makes it challenging to leverage features like loops and motifs associated with network formation mechanisms.\fTBD{Reminder for later: Pooling is not the only issue: even without it it would be hard (impossible?) to count loops etc from node distance labels.} We propose a link prediction algorithm based on a new pooling scheme called \walkpool. \walkpool combines the expressivity of topological heuristics with the feature-learning ability of neural networks. It summarizes a putative link by random walk probabilities of adjacent paths. Instead of extracting transition probabilities from the original graph, it computes the transition matrix of a ``predictive'' latent graph by applying attention to learned features; this may be interpreted as feature-sensitive topology fingerprinting. \walkpool can leverage unsupervised node features or be combined with GNNs and trained end-to-end. It outperforms state-of-the-art methods on all common link prediction benchmarks, both homophilic and heterophilic, with and without node attributes. Applying \walkpool to a set of unsupervised GNNs significantly improves prediction accuracy, suggesting that it may be used as a general-purpose graph pooling scheme.    
\end{abstract}
\vspace{-4mm}
\section{Introduction}
\begin{wrapfigure}{l}{0.44\textwidth} 
\centering
\includegraphics[width=0.95\linewidth]{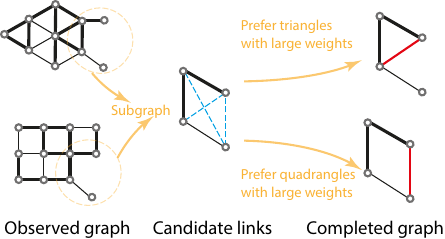}
\caption{\UPDATE{The topological organizing rules are not universal across graphs.}}
\label{fig:learnwp}
\vspace{-2mm}
\end{wrapfigure}
Graphs are a natural model for relational data such as coauthorship networks or the human protein interactome. Since real-world graphs are often only partially observed, a central problem across all scientific domains is to predict missing links \citep{liben2007link}. Link prediction finds applications in predicting protein interactions~\citep{qi2006evaluation}, drug responses~\citep{stanfield2017drug}, or completing the knowledge graph~\citep{nickel2015review}. It underpins recommender systems in social networks~\citep{adamic2003friends} and online marketplaces~\citep{lu2012recommender}. 
Successful link prediction demands an understanding of the principles behind graph formation.

In this paper we propose a link prediction algorithm which builds on two distinct traditions: (1) complex networks, from which we borrow ideas about the importance of topology, and (2) the emerging domain of graph neural networks (GNNs), on which we rely to learn optimal features. \UPDATE{We are motivated by the fact that the existing GNN-based link prediction algorithms encode topological\footnote{Mathematical topology studies (global) properties of shapes that are preserved under homeomorphisms. Our use of ``topology'' to refer to  local patterns is common in the network literature.} features only indirectly, while link prediction is a strongly topological task~\citep{lu2011link}. Traditional heuristics assume certain topological organizing rules, such as preference for triangles or quadrangles; see Figure~\ref{fig:learnwp} for an illustration. But organizing rules are not universal across graphs and they need to be learned. The centerpiece of our algorithm is a new random-walk-based pooling mechanism called \walkpool which may be interpreted as a learnable version of topological heuristics. Our algorithm is universal in the sense of \citet{chien2020adaptive} in that it models both heterophilic (e.g., 2D-grid in Figure~\ref{fig:learnwp}) and homophilic (e.g., Triangle lattice in Figure~\ref{fig:learnwp}) graphs unlike hardcoded heuristics or standard GNNs which often work better on homophilic graphs. \walkpool can  learn the topological organizing rules in real graphs (for further discussion of lattices from Figure \ref{fig:learnwp} see also Section 4.4), and achieves state-of-the-art link prediction results on all common benchmark datasets, sometimes by a significant margin, even on datasets like \textbf{USAir} \citep{usair} where the previous state of the art is as high as 97\%.}
\vspace{-2mm}
\subsection{Related work}
\vspace{-2mm}

Early studies on link prediction use heuristics from social network analysis~\citep{lu2011link}. The homophily mechanism for example~\citep{mcpherson2001birds} asserts that ``similar'' nodes connect. Most heuristics are based on connectivity: the common-neighbor index (CN) scores a pair of nodes by the number of shared neighbors, yielding predicted graphs with many triangles; Adami\'c--Adar index (AA) assumes that a highly-connected neighbor contributes less to the score of a focal  link~\citep{adamic2003friends}.
CN and AA rely on paths of length two. Others heuristics use longer paths, explicitly accounting for long-range correlations~\citep{lu2011link}. The Katz index ~\citep{katz1953new} scores a pair of nodes by a weighted sum of the number of walks of all possible lengths. PageRank scores a link by the probability that a walker starting at one end of the link reaches the other in the stationary state under a random walk model with restarts.

Heuristics make strong assumptions such as a particular parametric decay of path weights with length, often tailored to either homophilic or heterophilic graphs. Moreover, they cannot be easily used on graphs with node or edge features. Instead of hard-coded structural features, link prediction algorithms based on GNNs like the VGAE \citep{kipf2016variational} or SEAL \cite{zhang2018link} use learned node-level representations that capture both the structure and the features, yielding unprecedented accuracy. 

Two strategies have proved successful for GNN-based link prediction. The first is to devise a score function which only depends on the two nodes that define a link. The second is to extract a subgraph around the focal link and solve link prediction by (sub)graph classification~\citep{zhang2017weisfeiler}. In both cases the node representations are either learned in an unsupervised way~\citep{kipf2016variational,mavromatis2020graph} or computed by a GNN trained joinly with the link classifier~\citep{zhang2018link,zhang2020revisiting}. 

Algorithms based on two-body score functions achieved state-of-the-art performance when they appeared, 
but real networks brim with many-body correlations. Graphs in biochemistry, neurobiology, ecology, and engineering~\citep{milo2004superfamilies} exhibit \emph{motifs}---distinct patterns occuring much more often than in random graphs. Motifs may correspond to basic computational elements and relate to function~\citep{shen2002network}. A multilayer GNN generates node representation by aggregating information from neighbors, capturing to some extent many-body and long-range correlations. This dependence is however indirect and complex topological properties such as frequency of motifs are smoothed out.


The current state-of-the-art link prediction algorithm, SEAL \citep{zhang2018link}, is based on subgraph classification and thus may account for higher-order correlations. Unlike in vanilla graph classification where a priori all links are equaly important, in graph classification for link prediction the focal link plays a special role and the relative placement of other links with respect to it matters. SEAL takes this into account by labeling each node of the subgraph by its distance to the focal link. This endows the GNN with first-order topological information and helps distinguish relevant nodes in the pooled representation. Zhang and Chen show that node labeling is essential for SEAL's exceptional accuracy. Nevertheless, important structural motifs are represented indirectly (if at all) by such labeling schemes, even less so after average pooling. We hypothesize that a major bottleneck in link prediction comes from suboptimal pooling which fails to account for topology.
\vspace{-1mm}
\subsection{Our contribution}
\vspace{-2mm}
Following \citet{zhang2018link}, we approach link prediction from via subgraph classification. Instead of encoding relative topological information through node labels, we design a new pooling mechanism called \walkpool. \walkpool extracts higher-order structural information by encoding node representations and graph topology into random-walk transition probabilities on some effective \emph{latent} graph, and then using those probabilities to compute features we call \emph{walk profiles}. \walkpool generalizes loop-counting ideas used to build expressive graph models~\citep{pan2016predicting}. \UPDATE{Computing expressive topological descriptors which are simultaneously sensitive to node features is the key difference between \walkpool and SEAL.} Using normalized probabilities tuned by graph attention mitigates the well-known issue in graph learning that highly-connected nodes may bias predictions.  Transition probabilities and the derived walk profiles have simple topological interpretations. 

\walkpool can be applied to node representations generated by any unsupervised graph representation models or combined with GNNs and trained in end-to-end. It achieves state-of-the-art results on all common link prediction benchmarks. Our code and data are available online at \url{https://github.com/DaDaCheng/WalkPooling}.

\vspace{-2mm}
\section{Link prediction on graphs}
\vspace{-2mm}
We consider an \emph{observed} graph with $N$ nodes (or vertices), $\mathcal{G}^o = (\mathcal{V},\mathcal{E}^o)$, with $\mathcal{V}$ being the node set and $\mathcal{E}^o$ the set of \emph{observed} links (or edges). The observed link set $\mathcal{E}^o$ is a subset $\mathcal{E}^o\subset \mathcal{E}^*$ of the set of all true links $\mathcal{E}^*$. The target of link prediction is to infer missing links from a candidate set $\mathcal{E}^c$, which contains both true (in $\mathcal{E}^*$) and false (not in $\mathcal{E}^*$) missing links.

\begin{problem}(Link prediction)
\label{prob:lp}
Design an algorithm $\mathtt{LearnLP}$ that takes an observed graph $\mathcal{G}^o \subset \mathcal{G}$ and produces a link predictor $\mathtt{LearnLP}(\mathcal{G}^o) = \Pi \ : \ \mathcal{V} \times \mathcal{V} \to \set{\mathrm{True}, \mathrm{False}}$ which accurately classifies links in $\mathcal{E}^c$.
\end{problem}

Well-performing solutions to Problem \ref{prob:lp} exploit structural and functional regularities in the observed graph to make inferences about unobserved links.
\vspace{-1mm}
\paragraph{Path counts and random walks} For simplicity we identify the $N$ vertices with integers $1, \ldots, N$ and represent $\mathcal{G}$ by its adjacency matrix, $\mathbf{A} = (a_{ij})_{i,j=1}^N \in\{0,1\}^{N\times N}$ with $a_{ij} = 1$ if $\set{i, j} \in \mathcal{E}^o$ and $a_{ij} = 0$ otherwise.
Nodes may have associated feature vectors $(\vx_i \in \R^F, i \in \{1, \ldots, N\})$; we collect all feature vectors in the feature matrix $\mathbf{X} = [\mathbf{x}_1,\cdots,\mathbf{x}_N]^T \in \mathbb{R}^{N\times F}$.

Integer powers of the adjacency matrix reveal structural information: $[\mA^{\tau}]_{ij}$ is the number of paths of length $\tau$ connecting nodes $i$ and $j$. 
\walkpool relies on random walk transition matrices. For an adjacency matrix $\mA$ the transition matrix is defined as $\mP = \mD^{-1} \mA$ where $\mD = \mathrm{diag}(d_1, \ldots, d_N)$ and $d_i = \sum_j a_{ij} = | \mathcal{N}(i) |$ is the degree of the node $i$. Thus the probability $p_{ij} = d_i^{-1} a_{ij} $ that a random walker at node $i$ transitions to node $j$ is inversely proportional to the number of neighbors of $i$. An extension to graphs with non-negative edge weights $\mW = (w_{ij})$ is straightforward by replacing $a_{ij}$ by $w_{ij}$. Powers of $\mP$ are interpretable: $[\mP^\tau]_{ij}$ is the probability that a random walker starting at node $i$ will reach node $j$ in $\tau$ hops.

As we show in Section \ref{sub:rwp},  transition probabilities in \walkpool are determined as coefficients of an attention mechanism applied to learned node features. The node features are in turn extracted by a parameterized function $f_\theta$, which distills input attributes and to an extent graph topology into an embedding (a vector) for each node. The feature extractor $f_\theta \ : \ \{ 0, 1\}^{N \times N} \times \R^{N \times F} \to \R^{N \times F'}$ takes as input the adjacency matrix (which encodes the graph topology) and the input feature matrix, and outputs a distilled node feature matrix. It should thus be equivariant to node ordering in $\mA$ which is satisfied by GNNs.

\vspace{-1mm}
\section{WalkPool for link prediction by subgraph classification}
\vspace{-1mm}
\label{sec:walkpool}
\begin{figure}
\centering
\includegraphics[width=.8\linewidth]{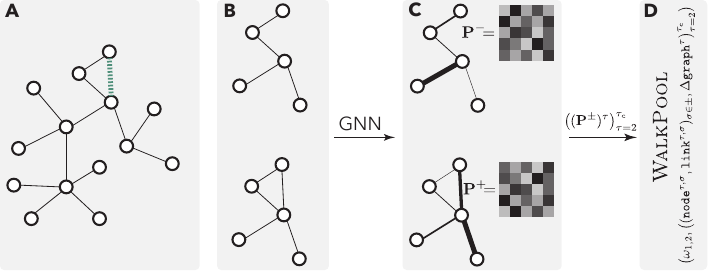}
\caption{Illustration of \walkpool. \textbf{A}: The input graph and the focal link $e$; \textbf{B}: enclosing subgraphs with and without $e$; \textbf{C}: attention-processed features $\equiv$ random walk transition probabilities; \textbf{D}: extracted walk profile.}
\label{fig:wp-summary}
\vspace{-3mm}
\end{figure}

We now describe \walkpool which directly leverages higher-order topological correlations without resorting to link labeling schemes and without making strong a priori assumptions. \walkpool first samples the $k$-hop subgraph $\mathcal{G}_{\{i,j\}}^k \subset \mathcal{G}^o$ enclosing the target link; and then computes random-walk profiles for sampled subgraphs with and without the target link. Random walk profiles are then fed into a link classifier.
Computing walk profiles entails
\vspace{-1mm}
\begin{enumerate}
    \item \textbf{Feature extraction}: $\mZ = f_\theta(\mA, \mX)$, where $f_\theta$ is a GNN;
    \vspace{-1mm}
    \item \textbf{Transition matrix computation}: $\mP = \mathtt{AttentionCoefficients}_\theta(\mZ;\mathcal{G})$;
    \vspace{-1mm}
    \item \textbf{Walk profiles}: Extract entries from $\mP^\tau$ for $2 \leq \tau \leq \tau_c$ related to the focal link.
\end{enumerate}
\vspace{-1mm}
We emphasize that we do not use attention to compute per-node linear combinations of features like \cite{velivckovic2017graph} (which is analogous to \cite{vaswani2017attention}), but rather interpret the attention \emph{coefficients} as random walk transition probabilities. The features $\mZ$ may be obtained either by an unsupervised GNN such as VGAE \citep{kipf2016variational}, or they may be computed by a GNN which is trained jointly with \walkpool.

\vspace{-2mm}
\subsection{Sampling the enclosing subgraphs}
\begin{wrapfigure}{l}{0.6\textwidth} 
\centering
\vspace{-4mm}
\includegraphics[width=1\linewidth]{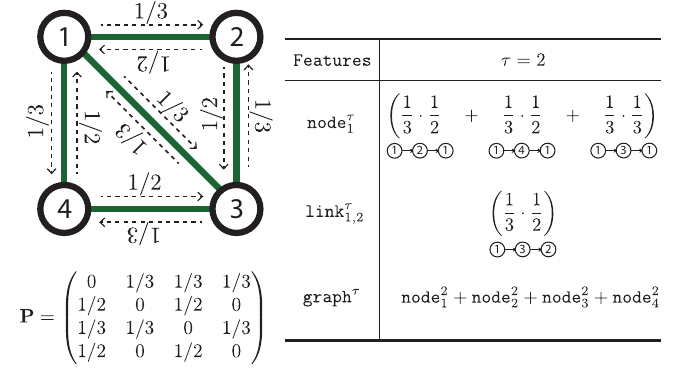}
\caption{Illustration of walk profiles for $\tau=2$. We assume $p_{i,j}=1/d_i$ where $d_i$ is the degree of node $i$.}
\label{fig:features}
\end{wrapfigure}
Following earlier work we make the assumption that the presence of a link only depends on its neighbors within a (small) radius $k$. It is known that simple heuristics like AA may already perform well on many graphs, with longer walks not bringing about significant gains. Indeed, SEAL~\citep{zhang2018link} exploits the fact that an order $k$ heuristic can be accurately calculated from a $k$-hop subgraph; optimal $k$ is related to the underlying generative model. Keeping $k$ small (as small as 2 in this paper) is pragmatic for reasons beyond diminishing returns: a large $k$ greatly increases memory and compute demands. The size of a 2-hop neighborhood is graph-dependent, but for the highly-connected E.coli and PB datasets we already need to enforce a maximum number of nodes per hop to fit the subgraphs in memory.

Let $d(x,y)$ be the shortest-path distance between nodes $x$ and $y$. The $k$-hop enclosing subgraph $\mathcal{G}^k_{\{i,j\}}$ for $\{i,j\}$ is defined as the subgraph induced from $\mathcal{G}^o$ by the set of nodes 
\vspace{-1mm}
$$
    \mathcal{V}^k_{\{i,j\}}=\{x\in \mathcal{V} \ : \ d(x,i) \leq k \text{ or } d(x,j) \leq k \}.
$$
\vspace{-1mm}
Then $\mathcal{G}^k_{\{i,j\}}=(\mathcal{V}^k_{\{i,j\}},\mathcal{E}^k_{\{i,j\}})$, where $\{x,y\}\in \mathcal{E}^k_{i,j}$ when $x,y \in \mathcal{V}^k_{\{i,j\}}$ and $\{ x ,y \} \in \mathcal{E}^o$. We omit the dependence on $k$ and write $G_{\{i,j\}}=(\mathcal{V}_{\{i,j\}},\mathcal{E}_{\{i,j\}})$ for simplicity.

We fix an arbitrary order of the nodes in $\mathcal{V}_{\{i,j\}}$ and denote the corresponding adjacency matrix by $\mathbf{A}_{\{i,j\}}$. Without loss of generality, we assume that under the chosen order of nodes, the nodes $i$ and $j$ are labeled as $1$ and $2$ so that the candidate link $\{i,j\}$ is mapped to $\{1,2\}$ in its enclosing subgraph. We denote the corresponding node feature matrix $\mZ_{\{i,j\}}$, with values inherited from the full graph (the rows of $\mZ_{\{i,j\}}$ are a subset of rows of $\mZ$). 

For the candidate set of links $\mathcal{E}^c$, we construct a set of enclosing subgraphs $\mathcal{S}^c = \{G_{\{i,j}\}: \{i,j\} \in \mathcal{E}^c\}$, thus transforming the link prediction problem into classifying these $k$-hop enclosing subgraphs. For training, we sample a set of known true and false edges $\mathcal{E}^t$ and construct its corresponding enclosing subgraph set $\mathcal{S}^t = \{G_{i,j}:(i,j)\in \mathcal{E}^t\}$.
\vspace{-2mm}
\subsection{Random-walk profile}
\label{sub:rwp}
\vspace{-2mm}

The next step is to classify the sampled subgraphs from their adjacency relations $\mathbf{A}_{\{i,j\}}$ and node representations $\mathbf{Z}_{\{i,j\}}$.  Inspired by the walk-based heuristics, we employ random walks to infer higher-order topological information. Namely, for a subgraph $(\mathcal{G} = (\mathcal{V}, \mathcal{E}), \mathbf{Z})$\fTBD{Check that $\mathcal{E}$ is not used for the total link set.} (either in $\mathcal{S}^c$ or $\mathcal{S}^t$) we encode the node representations $\mathbf{Z}$ into edge weights and use these edge weights to compute transition probabilities of a random walk on the underlying graph. Probabilities of walks of various lengths under this model yield a \emph{profile} of the focal link.\fTBD{How about the name ``walkprints''?}

We first encode two-node correlations \UPDATE{as effective edge weights},
\begin{equation}\label{eq:omega}
\omega_{x,y} = { Q_\theta(\vz_x)^T K_\theta(\vz_y)} \big/ {\sqrt{F^{\prime \prime}}},
\end{equation}
for all $\{x,y\}\in \mathcal{E}$, where $Q_\theta: \mathbb{R}^{F^{\prime}}\to \mathbb{R}^{F^{\prime \prime}}$ and $K_\theta: \mathbb{R}^{F^{\prime}}\to \mathbb{R}^{F^{\prime \prime}}$ are two multilayer perceptrons (MLPs) and $F^{\prime \prime}$ is the output dimension of the MLPs. In order to include higher-order topological information, we compute the random-walk transition matrix $\mP = (p_{x,y})$ from the two-body correlations. We set
\begin{equation}\label{eq:encoder}
p_{x, y} = 
\left[\mathrm{softmax}\left( (\omega_{x,z})_{z \in \mathcal{N}(x)} \right)\right]_y := \exp (\omega_{x,y}) \ / \textstyle \sum_{z\in \mathcal{N}(x)}\exp (\omega_{x,z}) 
\end{equation}
for $\{x, y\} \in \mathcal{E}$ and $p_{x, y} = 0$ otherwise,
with $\mathcal{N}(x)$ the set of neighbors of $x$ in the enclosing subgraph. The encoding~\eqref{eq:encoder} is analogous to graph attention coefficients~\citep{velivckovic2017graph,shi2020masked}; unlike graph attention, however, we directly use the coefficients instead of computing linear combinations;  this framework also allows multi-head attention.

Entries of the $\tau$-th power $[\mathbf{P}^{\tau}]_{ij}$ are interpreted as    probabilities that a random walker goes from  $i$ to $j$ in $\tau$ hops. These probabilities thus concentrate the topological \emph{and} node attributes relevant for the focal link into the form of random-walks:
\vspace{-1mm}
\begin{itemize}
    \item Topology is included indirectly through the GNN-extracted node features $\mZ$, and directly by the fact that $\mP$ encodes zero probabilities for non-neighbors and that its powers thus encode probabilities of paths and loops;
    \item Input features are included directly through the GNN-extracted node features, and refined and combined with topology by the key and value functions $Q_\theta$, $K_\theta$.
\end{itemize}
As a result, \walkpool\footnote{The output of pooling (e.g., in a CNN) is a often an object of the same type (e.g., a downsampled image). The last layer of a  CNN or GNN involves global average pooling which is a graph summarization mechanism similarly as \walkpool, hence the name.} can be interpreted as trainable heuristics.

From the matrix $\mathbf{P}$ and its powers, we now read a list of features to be used in graph classification. We compute node-level, link-level, and graph-level features:
\begin{equation}\label{eq:wp_profiles}
    \mathtt{node}^{\tau} = [\mathbf{P}^{\tau}]_{1,1} + [\mathbf{P}^{\tau}]_{2,2}, \quad \quad
    \mathtt{link}^{\tau} = [\mathbf{P}^{\tau}]_{1,2} + [\mathbf{P}^{\tau}]_{2,1}, \quad \quad
    \mathtt{graph}^{\tau} = \mathrm{tr} [\mathbf{P}^{\tau}].
\end{equation}
Computation of all features is illustrated for $\tau = 2$ in Figure~\ref{fig:features}. Node-level features $\mathtt{node}^\tau$ describe the loop structure around the candidate link (recall that $\{1, 2\}$ is the focal link in the subgraph). The summation ensures that the feature is invariant to the ordering of $i$ and $j$, consistent with the fact that we study undirected graphs. Link-level features $\mathtt{link}^\tau$ give the symmetrical probability that a length-$\tau$ random walk goes from node $1$ to $2$. Finally, graph-level features $\mathtt{graph}^\tau$ are related to the total probability of length-$\tau$ loops. All features depend on the node representation $\mathbf{Z}$; we omit this dependence for simplicity. \UPDATE{The use of  $\mathbf{P}$ in \walkpool is different from how graph matrices (e.g., $\mathbf{A}$) are used in GNNs. In GNNs, the powers $\mathbf{A}^{\tau}$ serve as shift operators between neighborhoods that are multiplied by filter weights and used to weigh node features; \walkpool encodes graph signals into effective edge weights and directly extracts topological information from the entries of $\mathbf{P}^\tau$.}
\vspace{-1mm}
\subsection{Perturbation extraction}
\vspace{-1mm}
A true link is by definition always present in its enclosing subgraph while a negative link is always absent. This leads to overfitting if we directly compute the features~\eqref{eq:wp_profiles} on the enclosing subgraphs since the presence or absence of the link has a strong effect on walk profiles.
For a normalized comparison of true and false links, we adopt a perturbation approach. Given an enclosing subgraph $\mathcal{G} = (\mathcal{V}, \mathcal{E})$, we define its variants $\mathcal{G}^{+} = (\mathcal{V}, \mathcal{E} \cup \{1, 2\})$ (resp. $\mathcal{G}^{-} = (\mathcal{V}, \mathcal{E} \backslash \{1, 2\})$) with the candidate link forced to be present (resp. absent). We denote the node-level features~\eqref{eq:wp_profiles} computed on $\mathcal{G}^{+}$  and $\mathcal{G}^{-}$ by $\mathtt{node}^{\tau,+}$ and $\mathtt{node}^{\tau,-}$, respectively, and analogously for the link- and graph-level features.

While the node- and link-level features are similar to the heuristics of counting walks, $\mathtt{graph}^{\tau,+}$ and $\mathtt{graph}^{\tau,-}$ are not directly useful for link prediction as discussed in the introduction: the information related to the presence of $\{i, j\}$ is obfuscated by the summation over the entire subgraph (by taking the trace). 


SEAL attempts to remedy a similar issue for average (as opposed to $\mathrm{tr}(\mP^\tau)$) pooling by labeling nodes by distance from the link. Here we propose a principled alternative. Since transition probabilities are normalized and have clear topological meaning, we can suppress irrelevant information by measuring the perturbation of graph-level features,
$
    \Delta \mathtt{graph}^{\tau} = \mathtt{graph}^{\tau,+}- \mathtt{graph}^{\tau,-}.
$
This ``background subtraction'' induces a data-driven soft limit on the longest loop length so that, by design, $\Delta \mathtt{graph}^\tau$  concentrates around the focal link. In Appendix~E, we demonstrate locality of \walkpool. Compared to node labeling, the perturbation approach does not manually assign a relative position (which may wrongly suggest that nodes at the same distance from the candidate link are of equal importance).

In summary, with \walkpool, for all $\mathcal{G}\in \{\mathcal{G}_{\{i,j\}}:\{i,j\}\in \mathcal{E}^c\}$, we read a list of features as
\begin{equation}\label{eq:wp_features}
\begin{split}
\mathtt{WP}_{\theta}(G,\mathbf{Z}) = \bigg[\omega_{1,2}, \left(\mathtt{node}^{\tau,+}, \mathtt{node}^{\tau,-}, \mathtt{link}^{\tau,+}, \mathtt{link}^{\tau,-}, \Delta \mathtt{graph}^{\tau}\right)_{\tau = 2}^{\tau_c} \bigg],
\end{split}
\end{equation}
where $\tau_c$ is the cutoff of the walk length and we treat it as a hyperparameter. The features are finally fed into a classifier; we use an MLP $\Pi_\theta$ with a sigmoid activation in the output. The ablation study in Table \ref{tab:appendixa} (Appendix B) shows that all the computed features contribute relevant predictive information. Walk profile computation is summarized in Figure \ref{fig:wp-summary}.
\vspace{-1mm}
\subsection{Training the model}
\vspace{-1mm}
The described model contains trainable parameters $\theta$ which are fitted on the given observed set $\mathcal{E}^t$ of positive and negative training links and their enclosing subgraphs.
We train \walkpool with MSE loss (see Appendix G for a discussion of the loss),
$$
    \theta^* = \argmin_\theta \frac{1}{\vert \mathcal{E}^t \vert}  \sum_{\{i,j\}\in \mathcal{E}^t} \left(y_{\{i, j\}}-\Pi_{\theta}\left(\mathtt{WP}_\theta(\mathcal{G}_{\{i, j\}}, \mZ_{\{i, j\}}\right)\right)^2
$$
where $y_{\{i, j\}} = 1$ if $\{ i,j \} \in \mathcal{E}^o$ and $0$ otherwise is the label indicating whether the link $\{i, j\}$ is true or false. The fitted model is then deployed on the candidate links $\mathcal{E}^c$; the predicted label for $\{x, y\} \in \mathcal{E}^c$ is simply
$
    \widehat{y}_{\{x, y\}} = \Pi_{\theta^*}(\mathtt{WP}_{\theta^*} (\mathcal{G}_{\{x, y\}}), \mathbf{Z}_{\{x, y\}}).
$
\vspace{-1mm}
\section{Performance of WalkPool on benchmark datasets}
\vspace{-1mm}
We use area under the curve (AUC)~\citep{bradley1997use} and average precision (AP) as metrics. Precision is the fraction of true positives among predictions.
Letting TP (FP) be the number of true (false) positive links, 
$\text{AP} ={\text{TP}}/({\text{TP}+\text{FP}}).
$
\vspace{-4mm}
\subsection{Datasets}
\vspace{-2mm}
In homophilic (heterophilic) graphs, nodes that are similar (dissimilar) are more likely to connect. For node classification, a homophily index is defined formally as the average fraction of neighbors with identical labels~\citep{pei2020geom}. In the context of link prediction, if we assume that network formation is driven by the similarity (or dissimilarity) of node attributes, then a homophilic graph will favor triangles while a heterophilic graph will inhibit triangles. Following this intuition, we adopt the average clustering coefficient $\mathtt{ACC}$~\citep{watts1998collective} as a topological correlate of homophily.

\UPDATE{We experiment with eight datasets without node attributes and seven with attributes. As graphs without attributes we use: (\romannumeral1) \textbf{USAir}~\citep{usair}, (\romannumeral2) \textbf{NS}~\citep{newman2006finding}, (\romannumeral3) \textbf{PB}~\citep{ackland2005mapping},  (\romannumeral4) \textbf{Yeast}~\citep{von2002comparative}, (\romannumeral5) \textbf{C.ele}~\citep{watts1998collective}, (\romannumeral6) \textbf{Power}~\citep{watts1998collective}, (\romannumeral7) \textbf{Router}~\citep{spring2002measuring}, and (\romannumeral8) \textbf{E.coli}~\citep{zhang2018beyond}. Properties and statistics of the datasets, including number of nodes and edges, edge density and $\mathtt{ACC}$ can be found in Table~\ref{dataset} of Appendix A.}

In graphs with a very low average clustering coefficient like \textbf{Power} and \textbf{Router} ($\mathtt{ACC}=0.080$ and $0.012$, respectively, see Appendix A), topology-based heuristics usually perform poorly~\citep{lu2011link} (cf. Table~\ref{tab1}), since heuristics often adopt the homophily assumption and rely on triangles. We show that by \textit{learning} the topological organizing patterns, \walkpool performs well even for these graphs. 

For a fair comparison, we use the exact same training and testing sets (including positive and negative links) as SEAL in \cite{zhang2018link}. $90\%$ of edges are taken as positive training edges and the remaining $10\%$ are the positive test edges. The same number of additionally sampled nonexistent links are taken as training and testing negative edges.

\UPDATE{As graphs \textit{with} node attributes, we use: (\romannumeral1) \textbf{Cora}~\citep{mccallum2000automating}, (\romannumeral2) \textbf{Citeseer}~\citep{giles1998citeseer}, (\romannumeral3) \textbf{Pubmed}~\citep{namata2012query}, (\romannumeral4) \textbf{Chameleon}~\citep{rozemberczki2021multi}, (\romannumeral5) \textbf{Cornell}~\citep{craven1998learning}, (\romannumeral6) \textbf{Texas}~\citep{craven1998learning}, and (\romannumeral7) \textbf{Wisconsin}~\citep{craven1998learning}. The properties and statistics of the datasets can be found in Table~\ref{dataset} of Appendix A; further details are provided in Appendix~A}

Following the experimental protocols in \citep{kipf2016variational,pan2018adversarially,mavromatis2020graph}, we split the links in three parts: $10\%$ testing, $5\%$ validation, $85\%$ training. We sample the same number of nonexisting links in each group as negative links.
\vspace{-4mm}
\subsection{Baselines}
\vspace{-4mm}
\begin{wrapfigure}{r}{0.5\textwidth} 
    \centering
    \vspace{-6mm}
    \includegraphics[width=\linewidth]{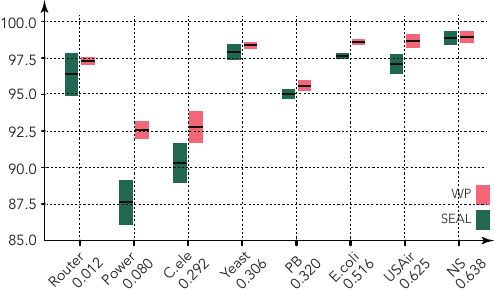}
    \caption{ Comparison of  mean and variance of AUC between SEAL and WP with 90\% observed links.  The datasets are sorted by their clustering coefficients.}
    \label{fig:clustcoeffbox}
    \vspace{-4mm}
\end{wrapfigure}
On benchmarks without node attributes, we compare \walkpool with eight other methods.  We consider three walk-based heuristics: AA, Katz and PR; two subgraph-based heuristic learning methods: Weisfeiler--Lehman graph kernel (WLK)~\citep{shervashidze2011weisfeiler} and WLNM~\citep{zhang2017weisfeiler}; and latent feature methods: node2vec (N2V)~\citep{grover2016node2vec}, spectral clustering (SPC)~\citep{tang2011leveraging}, \UPDATE {matrix factorization (MF)~\citep{koren2009matrix} and LINE~\citep{tang2015line}}. We additionally consider the GNN-based SEAL, which is the previous state-of-the-art. 

For datasets with node attributes, we combine \walkpool with three unsupervised GNN-based models: the VGAE, adversarially regularized variational graph autoencoder (ARGVA)~\citep{pan2018adversarially} and Graph InfoClust (GIC)~\citep{mavromatis2020graph}.  
\subsection{Implementation details}
\vspace{-1mm}
In the absence of node attributes, we initialize the node representation $\mathbf{Z}^{(0)}$ as an $N\times F^{(0)}$ all-ones matrix, with $F^{(0)} =32$. We generate node representations by a two-layered Graph Convolutional Network (GCN)~\citep{kipf2016semi}:
$
\mathbf{Z}^{(k)} = \mathrm{GCN}\left(\mathbf{Z}^{(k-1)}\right),  k\in \{1,2\},
$
where $\mathbf{Z}^{(0)}=\mathbf{X}$. We then concatenate the outputs as $\mathbf{Z} = [\mathbf{Z}^{(0)} \ \vert \ \mathbf{Z}^{(1)} \ \vert \ \mathbf{Z}^{(2)}]$ to obtain the final node representation used to compute $\mathtt{WP}(G,\mathbf{Z})$ and classify. While a node labeling scheme like the one in SEAL is not needed for the strong performance of \walkpool, we find that labeling nodes by distance to focal link slightly improves results; we thus evaluate \walkpool with and without distance labels.

For the three datasets with node attributes, we first adopt an unsupervised model to generate an initial node representation $\mathbf{Z}^{(0)}$. This is because standard 2-layer GCNs are not expressive enough to extract useful node features in the presence of node attributes \citet{zhang2019graph}.


The initial node representation is fed into the same two-layered GCN above, and we take the concatenated representation $\mathbf{Z} = [\mathbf{Z}^{(0)}\ \vert\  \mathbf{Z}^{(1)}\ \vert \ \mathbf{Z}^{(2)}]$ as the input to \walkpool. \UPDATE{The concatenation records local multi-hop information and facilitates training by providing skip connections. It gives a small improvement over only using the last-layer embeddings, i.e., $\mathbf{Z}=\mathbf{Z}^{(2)}$.} The hyperparameters of the model are explained in detail in Appendix C. \UPDATE{The runtime of \walkpool can be found in Appendix H.}
\vspace{-1mm}
\subsection{Results}
\vspace{-1mm}
\UPDATE{\paragraph{Synthetic datasets} To show that \walkpool successfully learns topology, we use four synthetic datasets: \textbf{Triangle lattice} ($\mathtt{ACC}=0.415$), \textbf{2D-Grid} ($\mathtt{ACC}=0$), \textbf{Hypercube} ($\mathtt{ACC}=0$), and \textbf{Star} ($\mathtt{ACC}=0$). We randomly remove $10\%$ links from each graph and run link prediction algorithms. AUC for \walkpool, SEAL, AA, and CN is shown in Table~\ref{tab_synthetic}. In these regular graphs the topological organizing rules are known explicitly. It is therefore clear that a common-neighbor-based heuristic (such as CN or AA) should fail for \textbf{2D-grid} since none of the connected nodes have common neighbors. WalkPool successfully learns the organizing patterns without  prior knowledge about the graph formation rules, using the same hyperparameters as in all other experiments. For \textbf{Triangle lattice} and \textbf{2D-grid}, WalkPool achieves a near-100\% AUC. Small errors are due to hidden the test edges which act as noise; for fewer withheld links the error would vanish. In \textbf{hypercube} and \textbf{Star}, WalkPool achieves an AUC of 100\% in all ten trials. The \textbf{star} graph is heterophilic in the sense that no triangles are present; we indeed observe that AA and CN have AUC below 50\% since they (on average) flip true and false edges.} 
\begin{table}[!]
\centering
\scalebox{0.7}{
\begin{tabular}{@{}lccccccc@{}}
\toprule
           &\textbf{WP}          &  \textbf{SEAL}        &  \textbf{AA}        & \textbf{CN}         & \textbf{VGAE}     \\
\midrule
Triangle lattice ($50 \times 50$)&99.77±0.12  &99.54±0.11 &95.16±0.90 &95.15±0.87  &  37.14±1.47  \\
2D-grid ($50 \times 50$)           &99.86±0.09  &99.51±0.09 &49.90±0.09 &49.90±0.09  &  32.43±1.12  \\
Hypercube ($2^{10}$)               &100.00±0.00 &99.60±0.10 &48.01±0.35 &48.01±0.35  &  37.84±1.19\\
Star ($1000$)                      &100.00±0.00 &99.57±0.10 &14.50±2.40 &14.50±2.40  &  100.00±0.00 \\
\bottomrule
\end{tabular}
}
    \caption{AUC for synthetic graphs over 10 independent trials.}
\label{tab_synthetic}
\vspace{-3mm}
\end{table}
\vspace{-4mm}
\paragraph{Datasets without attributes} We perform $10$ random splits of the data. The average AUCs with standard deviations are shown in Table~\ref{tab1}; the AP results \UPDATE{and statistical significance of the results} can be found in Appendix~D.\fTBD{Are these reference to appendices correct? Why are they hard coded?} For \walkpool, we have considered both the cases with and without the DL node labels\fTBD{has this been defined?} as input features. 

From Table~\ref{tab1}, \walkpool significantly improves the prediction accuracy on all the datasets we have considered. It also has a smaller standard deviation, indicating that \walkpool is stable on independent data splits (unlike competing learning methods). Among all methods, \walkpool with DL performs the best; the second-best method slightly below is \walkpool without DL. \UPDATE{In other words, although DRNL slightly improves \walkpool, nonetheless, \walkpool achieves SOTA performance already without it.}

\walkpool performs stably on both homophilic (high ACC) and heterophilic (low ACC) datasets, achieving  state-of-the-art performance on all. Remarkably, on \textbf{Power} and \textbf{Router} where topology-based heuristics such as AA and Katz fail, \walkpool shows strong performance (it also outperforms SEAL by about 5\% on \textbf{Power}). This confirms that walk profiles are expressive descriptors of the local network organizing patterns, and that \walkpool can fit their parameters from data without making prior topological assumptions. \UPDATE{Experiments with $50\%$ observed training edges can be found in Appendix F.}
\vspace{-2mm}
\paragraph{Datasets with node attributes} 
We apply \walkpool to extract higher-order information from node representations generated via unsupervised learning. We again perform $10$ random splits of the data and report the average AUC with standard deviations in Table~\ref{tab2}; for AP see Appendix~D. In Table~\ref{tab2}, we show the results of unsupervised models with and without \walkpool. For all the unsupervised models and datasets, \walkpool improves the prediction accuracy. On the \textbf{Pubmed} dataset where the relative importance of topology over features is greater, \walkpool brings about the most significant gains.

While traditional heuristics cannot include node attributes, \walkpool encodes the node attributes into random walk transition probabilities via the attention mechanism, which allows it to capture structure and node attributes simultaneously. The prediction accuracy relies on the unsupervised GNN which generates the inital node representation. We find that the combination of GIC and \walkpool yields the highest accuracy for all three datasets. Importance of initial node representations is plain since \walkpool is not designed to uncover structure in node attributes. Nonetheless, \walkpool greatly improves performance of unsupervised GNN on the downstream link prediction task.
\addtolength{\tabcolsep}{-3pt}   
\begin{table}[!]
\centering
\ra{1.1}
\scalebox{0.7}{
\begin{tabular}{@{}lccccccccc@{}}
\toprule
\textbf{Data}          &\textbf{USAir}          &  \textbf{NS}        &  \textbf{PB}        & \textbf{Yeast}         & \textbf{C.ele}         & \textbf{Power}         & \textbf{Router}       & \textbf{E.coli}  \\
\midrule
AA       &95.06±1.03 &94.45±0.93 & 92.36±0.34 & 89.43±0.62 & 86.95±1.40 & 58.79±0.88 & 56.43±0.51 & 95.36±0.34\\
Katz     &92.88±1.42 &94.85±1.10 & 92.92±0.35 & 92.24±0.61 & 86.34±1.89 & 65.39±1.59 & 38.62±1.35 & 93.50±0.44\\
PR       &94.67±1.08 &94.89±1.08 &93.54±0.41 & 92.76±0.55 & 90.32±1.49 & 66.00±1.59 & 38.76±1.39 & 95.57±0.44\\
WLK      &96.63±0.73 &98.57±0.51 &93.83±0.59 & 95.86±0.54 & 89.72±1.67 & 82.41±3.43 & 87.42±2.08 & 96.94±0.29\\
WLNM     &95.95±1.10 &98.61±0.49 &93.49±0.47 & 95.62±0.52 & 86.18±1.72 & 84.76±0.98 & 94.41±0.88 & 97.21±0.27\\
N2V      &91.44±1.78 &91.52±1.28 &85.79±0.78 & 93.67±0.46 & 84.11±1.27 & 76.22±0.92 & 65.46±0.86 & 90.82±1.49\\
SPC      &74.22±3.11 &89.94±2.39 &83.96±0.86 & 93.25±0.40 & 51.90±2.57 & \second{91.78±0.61} & 68.79±2.42 & 94.92±0.32\\
MF       &94.08±0.80 &74.55±4.34 &94.30±0.53 &90.28±0.69  &85.90±1.74  &50.63±1.10 &78.03±1.63  & 93.76±0.56 \\
LINE    &81.47±10.71 &80.63±1.90 &76.95±2.76  &87.45±3.33 & 69.21±3.14 &55.63±1.47  &67.15±2.10  & 82.38±2.19 \\
SEAL  &97.09±0.70 & 98.85±0.47 & 95.01±0.34 & 97.91±0.52 & 90.30±1.35 & 87.61±1.57 & 96.38±1.45 &97.64±0.22\\
\midrule
WP(ones) &\second{98.52±0.50} & \second{98.86±0.42} & \second{95.42±0.39} & \second{98.16±0.33} & \second{92.42±1.22} & 91.71±0.60 &\second{97.18±0.28} & \second{98.54±0.20}\\
WP(DL) &\first{98.68±0.48} & \first{98.95±0.41} & \first{95.60±0.37} & \first{98.37±0.25} & \first{92.79±1.09} & \first{92.56±0.60} & \first{97.27±0.28} &\first{98.58±0.19}\\
\bottomrule
\end{tabular}
}
\caption{Prediction accuracy measured by AUC on eight datasets (90\% observed links) without node attributes. Boldface marks the best, underline the second best results.}
\label{tab1}
\vspace{-1mm}
\end{table}
\addtolength{\tabcolsep}{3pt}

\vspace{-2mm}
\section{Discussion}
\vspace{-1mm}
The topology of a graph plays a much more important role in link prediction than node classification, even with node attributes. Link prediction and topology are entangled, so to speak, since topology is \textit{defined} by the links. Most GNN-based link prediction methods work with node representations and do not adequately leverage topological information. 

\begin{table}
\centering
\ra{1}
\scalebox{0.75}{
\begin{tabular}{@{}lcccccccc@{}}\toprule
& \multicolumn{2}{c}{VGAE} & \phantom{abc} & \multicolumn{2}{c}{ARGVA}&\phantom{abc} & \multicolumn{2}{c}{GIC}\\
 \cmidrule{2-3} \cmidrule{5-6} \cmidrule{8-9}
                     & NO WP     & WP       && NO WP        & WP         && NO WP       & WP \\ \midrule
\textbf{Cora}       & 91.98±0.54&94.64±0.55&& 92.45±1.11   &94.71±0.85  &&93.68±0.59   &\first{95.90±0.50}\\
\textbf{Citeseer}   & 91.21±1.14&94.32±0.90&& 91.71±1.38   &94.53±1.77  &&95.03±0.65   &\first{95.94±0.53}\\
\textbf{Pubmed}     & 96.51±0.14&98.49±0.13&& 96.62±0.12   &98.52±0.14  &&93.00±0.36   &\first{98.72±0.10}\\
\textbf{Chameleon}  & 98.79±0.18&99.51±0.08&& 98.23±0.24   &99.51±0.08  &&94.13±0.38   &\first{99.52±0.08}\\
\textbf{Cornell}    & 70.59±9.03&78.24±7.51&& 81.73±4.82 &\first{82.39±8.92}  &&63.32±7.47   &80.69±7.25\\
\textbf{Texas}      & 73.71±9.29&\first{76.02±7.05}&& 68.05±8.29  &75.62±5.80  &&65.43±10.39   &74.49±6.85\\
\textbf{Wisconsin}  & 75.05±6.88&77.07±6.11&& 75.69±7.91 &79.34±6.32  &&74.74±6.28   &\first{82.27±6.27}\\
\bottomrule
\end{tabular}
}
\caption{Prediction accuracy ( AUC) on datasets with node attributes (90\% observed links).}
\label{tab2}
\vspace{-3mm}
\end{table}

Our proposed \walkpool, to the contrary, jointly encodes node representations and graph topology into \textit{learned}  topological features. The central idea is how to leverage learning: we apply attention to the node representations and interpret the attention coefficients as transition probabilities of a graph random walk. \walkpool is a trainable topological heuristic, thus explicitly considering long-range correlations, but without making ad hoc assumptions like the classical heuristics.
This intuition is borne out in practice: combining supervised or unsupervised graph neural networks with \walkpool yields state-of-the-art performance on a broad range of benchmarks with diverse structural characteristics. Remarkably, 
\walkpool achieves this with the same set of hyperparameters on all tasks regardless of the network type and generating mechanism.



\bibliographystyle{iclr2022_conference}
\bibliography{walkpool}

\begin{thebibliography}{43}
\providecommand{\natexlab}[1]{#1}
\providecommand{\url}[1]{\texttt{#1}}
\expandafter\ifx\csname urlstyle\endcsname\relax
  \providecommand{\doi}[1]{doi: #1}\else
  \providecommand{\doi}{doi: \begingroup \urlstyle{rm}\Url}\fi

\bibitem[Ackland et~al.(2005)]{ackland2005mapping}
Robert Ackland et~al.
\newblock Mapping the us political blogosphere: Are conservative bloggers more
  prominent?
\newblock In \emph{BlogTalk Downunder 2005 Conference, Sydney}, 2005.

\bibitem[Adamic \& Adar(2003)Adamic and Adar]{adamic2003friends}
Lada~A Adamic and Eytan Adar.
\newblock Friends and neighbors on the web.
\newblock \emph{Social networks}, 25\penalty0 (3):\penalty0 211--230, 2003.

\bibitem[Batagelj \& Mrvar(2006)Batagelj and Mrvar]{usair}
Vladimir Batagelj and Andrej Mrvar.
\newblock {Pajek datasets website}.
\newblock \url{http://vlado.fmf.uni-lj.si/pub/networks/data/}, 2006.

\bibitem[Bradley(1997)]{bradley1997use}
Andrew~P Bradley.
\newblock The use of the area under the roc curve in the evaluation of machine
  learning algorithms.
\newblock \emph{Pattern recognition}, 30\penalty0 (7):\penalty0 1145--1159,
  1997.

\bibitem[Chien et~al.(2020)Chien, Peng, Li, and Milenkovic]{chien2020adaptive}
Eli Chien, Jianhao Peng, Pan Li, and Olgica Milenkovic.
\newblock Adaptive universal generalized pagerank graph neural network.
\newblock \emph{arXiv preprint arXiv:2006.07988}, 2020.

\bibitem[Craven et~al.(1998)Craven, McCallum, PiPasquo, Mitchell, and
  Freitag]{craven1998learning}
Mark Craven, Andrew McCallum, Dan PiPasquo, Tom Mitchell, and Dayne Freitag.
\newblock Learning to extract symbolic knowledge from the world wide web.
\newblock Technical report, Carnegie-mellon univ pittsburgh pa school of
  computer Science, 1998.

\bibitem[Giles et~al.(1998)Giles, Bollacker, and Lawrence]{giles1998citeseer}
C~Lee Giles, Kurt~D Bollacker, and Steve Lawrence.
\newblock Citeseer: An automatic citation indexing system.
\newblock In \emph{Proceedings of the third ACM conference on Digital
  libraries}, pp.\  89--98, 1998.

\bibitem[Grover \& Leskovec(2016)Grover and Leskovec]{grover2016node2vec}
Aditya Grover and Jure Leskovec.
\newblock node2vec: Scalable feature learning for networks.
\newblock In \emph{Proceedings of the 22nd ACM SIGKDD International Conference
  on Knowledge Discovery and Data mining}, pp.\  855--864, 2016.

\bibitem[Katz(1953)]{katz1953new}
Leo Katz.
\newblock A new status index derived from sociometric analysis.
\newblock \emph{Psychometrika}, 18\penalty0 (1):\penalty0 39--43, 1953.

\bibitem[Kipf \& Welling(2016{\natexlab{a}})Kipf and Welling]{kipf2016semi}
Thomas~N Kipf and Max Welling.
\newblock Semi-supervised classification with graph convolutional networks.
\newblock \emph{arXiv preprint arXiv:1609.02907}, 2016{\natexlab{a}}.

\bibitem[Kipf \& Welling(2016{\natexlab{b}})Kipf and
  Welling]{kipf2016variational}
Thomas~N Kipf and Max Welling.
\newblock Variational graph auto-encoders.
\newblock \emph{arXiv:1611.07308}, 2016{\natexlab{b}}.

\bibitem[Koren et~al.(2009)Koren, Bell, and Volinsky]{koren2009matrix}
Yehuda Koren, Robert Bell, and Chris Volinsky.
\newblock Matrix factorization techniques for recommender systems.
\newblock \emph{Computer}, 42\penalty0 (8):\penalty0 30--37, 2009.

\bibitem[Liben-Nowell \& Kleinberg(2007)Liben-Nowell and
  Kleinberg]{liben2007link}
David Liben-Nowell and Jon Kleinberg.
\newblock The link-prediction problem for social networks.
\newblock \emph{Journal of the American society for information science and
  technology}, 58\penalty0 (7):\penalty0 1019--1031, 2007.

\bibitem[L{\"u} \& Zhou(2011)L{\"u} and Zhou]{lu2011link}
Linyuan L{\"u} and Tao Zhou.
\newblock Link prediction in complex networks: A survey.
\newblock \emph{Physica A: statistical mechanics and its applications},
  390\penalty0 (6):\penalty0 1150--1170, 2011.

\bibitem[L{\"u} et~al.(2012)L{\"u}, Medo, Yeung, Zhang, Zhang, and
  Zhou]{lu2012recommender}
Linyuan L{\"u}, Mat{\'u}{\v{s}} Medo, Chi~Ho Yeung, Yi-Cheng Zhang, Zi-Ke
  Zhang, and Tao Zhou.
\newblock Recommender systems.
\newblock \emph{Physics Reports}, 519\penalty0 (1):\penalty0 1--49, 2012.

\bibitem[Mavromatis \& Karypis(2020)Mavromatis and
  Karypis]{mavromatis2020graph}
Costas Mavromatis and George Karypis.
\newblock Graph infoclust: Leveraging cluster-level node information for
  unsupervised graph representation learning.
\newblock \emph{arXiv:2009.06946}, 2020.

\bibitem[McCallum et~al.(2000)McCallum, Nigam, Rennie, and
  Seymore]{mccallum2000automating}
Andrew~Kachites McCallum, Kamal Nigam, Jason Rennie, and Kristie Seymore.
\newblock Automating the construction of internet portals with machine
  learning.
\newblock \emph{Information Retrieval}, 3\penalty0 (2):\penalty0 127--163,
  2000.

\bibitem[McPherson et~al.(2001)McPherson, Smith-Lovin, and
  Cook]{mcpherson2001birds}
Miller McPherson, Lynn Smith-Lovin, and James~M Cook.
\newblock Birds of a feather: Homophily in social networks.
\newblock \emph{Annual review of sociology}, 27\penalty0 (1):\penalty0
  415--444, 2001.

\bibitem[Milo et~al.(2004)Milo, Itzkovitz, Kashtan, Levitt, Shen-Orr,
  Ayzenshtat, Sheffer, and Alon]{milo2004superfamilies}
Ron Milo, Shalev Itzkovitz, Nadav Kashtan, Reuven Levitt, Shai Shen-Orr, Inbal
  Ayzenshtat, Michal Sheffer, and Uri Alon.
\newblock Superfamilies of evolved and designed networks.
\newblock \emph{Science}, 303\penalty0 (5663):\penalty0 1538--1542, 2004.

\bibitem[Namata et~al.(2012)Namata, London, Getoor, and Huang]{namata2012query}
Galileo Namata, Ben London, Lise Getoor, and Bert Huang.
\newblock Query-driven active surveying for collective classification.
\newblock In \emph{10th International Workshop on Mining and Learning with
  Graphs}, volume~8, pp.\ ~1, 2012.

\bibitem[Newman(2006)]{newman2006finding}
Mark~EJ Newman.
\newblock Finding community structure in networks using the eigenvectors of
  matrices.
\newblock \emph{Physical review E}, 74\penalty0 (3):\penalty0 036104, 2006.

\bibitem[Nickel et~al.(2015)Nickel, Murphy, Tresp, and
  Gabrilovich]{nickel2015review}
Maximilian Nickel, Kevin Murphy, Volker Tresp, and Evgeniy Gabrilovich.
\newblock A review of relational machine learning for knowledge graphs.
\newblock \emph{Proceedings of the IEEE}, 104\penalty0 (1):\penalty0 11--33,
  2015.

\bibitem[Pan et~al.(2016)Pan, Zhou, L{\"u}, and Hu]{pan2016predicting}
Liming Pan, Tao Zhou, Linyuan L{\"u}, and Chin-Kun Hu.
\newblock Predicting missing links and identifying spurious links via
  likelihood analysis.
\newblock \emph{Scientific Reports}, 6\penalty0 (1):\penalty0 1--10, 2016.

\bibitem[Pan et~al.(2018)Pan, Hu, Long, Jiang, Yao, and
  Zhang]{pan2018adversarially}
Shirui Pan, Ruiqi Hu, Guodong Long, Jing Jiang, Lina Yao, and Chengqi Zhang.
\newblock Adversarially regularized graph autoencoder for graph embedding.
\newblock \emph{arXiv:1802.04407}, 2018.

\bibitem[Pei et~al.(2020)Pei, Wei, Chang, Lei, and Yang]{pei2020geom}
Hongbin Pei, Bingzhe Wei, Kevin Chen-Chuan Chang, Yu~Lei, and Bo~Yang.
\newblock Geom-gcn: Geometric graph convolutional networks.
\newblock \emph{arXiv preprint arXiv:2002.05287}, 2020.

\bibitem[Qi et~al.(2006)Qi, Bar-Joseph, and
  Klein-Seetharaman]{qi2006evaluation}
Yanjun Qi, Ziv Bar-Joseph, and Judith Klein-Seetharaman.
\newblock Evaluation of different biological data and computational
  classification methods for use in protein interaction prediction.
\newblock \emph{Proteins: Structure, Function, and Bioinformatics}, 63\penalty0
  (3):\penalty0 490--500, 2006.

\bibitem[Rozemberczki et~al.(2021)Rozemberczki, Allen, and
  Sarkar]{rozemberczki2021multi}
Benedek Rozemberczki, Carl Allen, and Rik Sarkar.
\newblock Multi-scale attributed node embedding.
\newblock \emph{Journal of Complex Networks}, 9\penalty0 (2):\penalty0 cnab014,
  2021.

\bibitem[Shen-Orr et~al.(2002)Shen-Orr, Milo, Mangan, and
  Alon]{shen2002network}
Shai~S Shen-Orr, Ron Milo, Shmoolik Mangan, and Uri Alon.
\newblock Network motifs in the transcriptional regulation network of
  escherichia coli.
\newblock \emph{Nature genetics}, 31\penalty0 (1):\penalty0 64--68, 2002.

\bibitem[Shervashidze et~al.(2011)Shervashidze, Schweitzer, Van~Leeuwen,
  Mehlhorn, and Borgwardt]{shervashidze2011weisfeiler}
Nino Shervashidze, Pascal Schweitzer, Erik~Jan Van~Leeuwen, Kurt Mehlhorn, and
  Karsten~M Borgwardt.
\newblock Weisfeiler-lehman graph kernels.
\newblock \emph{Journal of Machine Learning Research}, 12\penalty0 (9), 2011.

\bibitem[Shi et~al.(2020)Shi, Huang, Wang, Zhong, Feng, and Sun]{shi2020masked}
Yunsheng Shi, Zhengjie Huang, Wenjin Wang, Hui Zhong, Shikun Feng, and Yu~Sun.
\newblock Masked label prediction: Unified message passing model for
  semi-supervised classification.
\newblock \emph{arXiv:2009.03509}, 2020.

\bibitem[Spring et~al.(2002)Spring, Mahajan, and
  Wetherall]{spring2002measuring}
Neil Spring, Ratul Mahajan, and David Wetherall.
\newblock Measuring isp topologies with rocketfuel.
\newblock \emph{ACM SIGCOMM Computer Communication Review}, 32\penalty0
  (4):\penalty0 133--145, 2002.

\bibitem[Stanfield et~al.(2017)Stanfield, Co{\c{s}}kun, and
  Koyut{\"u}rk]{stanfield2017drug}
Zachary Stanfield, Mustafa Co{\c{s}}kun, and Mehmet Koyut{\"u}rk.
\newblock Drug response prediction as a link prediction problem.
\newblock \emph{Scientific Reports}, 7\penalty0 (1):\penalty0 1--13, 2017.

\bibitem[Tang et~al.(2015)Tang, Qu, Wang, Zhang, Yan, and Mei]{tang2015line}
Jian Tang, Meng Qu, Mingzhe Wang, Ming Zhang, Jun Yan, and Qiaozhu Mei.
\newblock Line: Large-scale information network embedding.
\newblock In \emph{Proceedings of the 24th International Conference on World
  Wide web}, pp.\  1067--1077, 2015.

\bibitem[Tang \& Liu(2011)Tang and Liu]{tang2011leveraging}
Lei Tang and Huan Liu.
\newblock Leveraging social media networks for classification.
\newblock \emph{Data Mining and Knowledge Discovery}, 23\penalty0 (3):\penalty0
  447--478, 2011.

\bibitem[Vaswani et~al.(2017)Vaswani, Shazeer, Parmar, Uszkoreit, Jones, Gomez,
  Kaiser, and Polosukhin]{vaswani2017attention}
Ashish Vaswani, Noam Shazeer, Niki Parmar, Jakob Uszkoreit, Llion Jones,
  Aidan~N Gomez, {\L}ukasz Kaiser, and Illia Polosukhin.
\newblock Attention is all you need.
\newblock In \emph{Advances in neural information processing systems}, pp.\
  5998--6008, 2017.

\bibitem[Veli{\v{c}}kovi{\'c} et~al.(2017)Veli{\v{c}}kovi{\'c}, Cucurull,
  Casanova, Romero, Lio, and Bengio]{velivckovic2017graph}
Petar Veli{\v{c}}kovi{\'c}, Guillem Cucurull, Arantxa Casanova, Adriana Romero,
  Pietro Lio, and Yoshua Bengio.
\newblock Graph attention networks.
\newblock \emph{arXiv:1710.10903}, 2017.

\bibitem[Von~Mering et~al.(2002)Von~Mering, Krause, Snel, Cornell, Oliver,
  Fields, and Bork]{von2002comparative}
Christian Von~Mering, Roland Krause, Berend Snel, Michael Cornell, Stephen~G
  Oliver, Stanley Fields, and Peer Bork.
\newblock Comparative assessment of large-scale data sets of protein--protein
  interactions.
\newblock \emph{Nature}, 417\penalty0 (6887):\penalty0 399--403, 2002.

\bibitem[Watts \& Strogatz(1998)Watts and Strogatz]{watts1998collective}
Duncan~J Watts and Steven~H Strogatz.
\newblock Collective dynamics of ‘small-world’networks.
\newblock \emph{nature}, 393\penalty0 (6684):\penalty0 440--442, 1998.

\bibitem[Zhang \& Chen(2017)Zhang and Chen]{zhang2017weisfeiler}
Muhan Zhang and Yixin Chen.
\newblock Weisfeiler-lehman neural machine for link prediction.
\newblock In \emph{Proceedings of the 23rd ACM SIGKDD International Conference
  on Knowledge Discovery and Data Mining}, pp.\  575--583, 2017.

\bibitem[Zhang \& Chen(2018)Zhang and Chen]{zhang2018link}
Muhan Zhang and Yixin Chen.
\newblock Link prediction based on graph neural networks.
\newblock \emph{Advances in Neural Information Processing Systems},
  31:\penalty0 5165--5175, 2018.

\bibitem[Zhang et~al.(2018)Zhang, Cui, Jiang, and Chen]{zhang2018beyond}
Muhan Zhang, Zhicheng Cui, Shali Jiang, and Yixin Chen.
\newblock Beyond link prediction: Predicting hyperlinks in adjacency space.
\newblock In \emph{Thirty-Second AAAI Conference on Artificial Intelligence},
  2018.

\bibitem[Zhang et~al.(2020)Zhang, Li, Xia, Wang, and Jin]{zhang2020revisiting}
Muhan Zhang, Pan Li, Yinglong Xia, Kai Wang, and Long Jin.
\newblock Revisiting graph neural networks for link prediction.
\newblock \emph{arXiv:2010.16103}, 2020.

\bibitem[Zhang et~al.(2019)Zhang, Tong, Xu, and Maciejewski]{zhang2019graph}
Si~Zhang, Hanghang Tong, Jiejun Xu, and Ross Maciejewski.
\newblock Graph convolutional networks: a comprehensive review.
\newblock \emph{Computational Social Networks}, 6\penalty0 (1):\penalty0 1--23,
  2019.

\end{thebibliography}

\newpage 

\section*{Appendix A. Benchmark datasets description}
Brief description of the benchmark datasets is as follows. For the datasets without node attributes, we have considered:
\vspace{-1mm}
\begin{itemize}
\item \textbf{USAir}~\citep{usair}: a graph of US airlines with 332 nodes and 2126 edges; $\mathtt{ACC} = 0.625$.
\item \textbf{NS}~\citep{newman2006finding}: the collaboration relation of network science researchers with 1589 nodes and 2742 edges; $\mathtt{ACC} = 0.638$.
\item \textbf{PB}~\citep{ackland2005mapping}: a graph of hyperlinks between weblogs on US politics  with 1222 nodes and 16714 edges; $\mathtt{ACC} = 0.320$.
\item \textbf{Yeast}~\citep{von2002comparative}: a protein-protein interaction graph in yeast with 2375 nodes and 11693 edges; $\mathtt{ACC} = 0.306$. 
\item \textbf{C.ele}~\citep{watts1998collective}: the biological neural network of C.elegans with 297 nodes and 2148 edges; $\mathtt{ACC} = 0.292$.
\item \textbf{Power}~\citep{watts1998collective}: the topology of the Western States Power Grid of the United States with 4941 nodes and 6594 edges; $\mathtt{ACC} = 0.080$.
\item \textbf{Router}~\citep{spring2002measuring}: the router-level Internet with 5022 nodes and 6258 edges; $\mathtt{ACC} = 0.012$.
\item \textbf{E.coli}~\citep{zhang2018beyond}: the pairwise reaction relation of metabolites in E.coli with 1805 nodes and 15660 edges; $\mathtt{ACC} = 0.516$.
\vspace{-1mm}
\end{itemize}

As graphs \textit{with} node attributes, we consider three citation graphs with publications described by binary vectors indicating the absence/presence of the corresponding dictionary word: 
\begin{itemize}
    \item \textbf{Cora}~\citep{mccallum2000automating}: a citation graph with 2708 scientific publications and 5278 links. The dictionary consists of 1433 unique words; $\mathtt{ACC} = 0.241$.
    \item \textbf{Citeseer}~\citep{giles1998citeseer}: the dataset consists of 3312 scientific publications with 4552 links; the dictionary consists of 3703 unique words; $\mathtt{ACC} = 0.141$.
    \item \textbf{Pubmed}~\citep{namata2012query}: the dataset consists of 19717 scientific publications with 44324 links. The dictionary consists of $500$ unique words; $\mathtt{ACC} = 0.060$.
\end{itemize}
We also consider a Wikipedia page graph where nodes represent web pages and edges represent hyperlinks between them. Node features represent several informative nouns in the pages:
\begin{itemize}
    \item \textbf{Chameleon}~\citep{rozemberczki2021multi}: Wikipedia page-page graph under the topic chameleon. The graph consists of 2277 nodes and 31371 edges where each node has an attribute vector of dimension 2325; $\mathtt{ACC} = 0.377$.
\end{itemize}
Finally, we consider three webpage graphs which include web pages from computer science departments of various universities, node features are the bag-of-words representation of web pages. 
\begin{itemize}
    \item \textbf{Cornell}~\citep{craven1998learning}: a webpage graph with 183 nodes and 277 edges, and the node attribute has dimension 1703; $\mathtt{ACC} = 0.167$.
    \item \textbf{Texas}~\citep{craven1998learning}: a webpage graph with 183 nodes and 279 edges, and the node attribute has dimension 1703; $\mathtt{ACC} = 0.198$.
    \item \textbf{Wisconsin}~\citep{craven1998learning}: a webpage graph with 251 nodes and 450 edges, and the node attribute has dimension 1703; $\mathtt{ACC} = 0.208$.
\end{itemize}

The benchmark dataset properties and statistics are summarized in Table~\ref{dataset}.

\addtolength{\tabcolsep}{-3pt}
\begin{table}[!]
\centering
\setlength{\tabcolsep}{3.3pt}
\ra{1.2}
\scalebox{0.57}{
\begin{tabular}{@{}lccccccccccccccccc@{}}
\toprule
Dataset &\textbf{USAir}&\textbf{NS}&\textbf{PB}&\textbf{Yeast}&\textbf{C.ele}&\textbf{Power}&\textbf{Router}&\textbf{E.coli} &\textbf{Cora} &\textbf{Citeseer}&\textbf{Pubmed}&\textbf{Chameleon}&\textbf{Cornell}&\textbf{Texas}&\textbf{Wisconsin}\\\midrule
Node&332&1589&1222&2375&297&4941&5022&1805&2708&3312&19717&2277&183&183&251 \\
Edges&2126&2742&16714&11693&2148&6594&6258&15660&5278&4552&44324&31371&227&279&450\\
ACC&0.625&0.638&0.320&0.306&0.292&0.080&0.012&0.516&0.241&0.141&0.060&0.377&0.167&0.198&0.208\\
Density&3.86e-2&2.17e-3&2.24e-2&4.15e-3&4.87e-2&5.40e-4&4.96e-4&9.61e-3&1.44e-3&8.30e-4&2.28e-4&1.21e-2&1.35e-2&1.67e-2&1.42e-2\\
Attributed &No&No&No&No&No&No&No&No&Yes&Yes&Yes&Yes&Yes&Yes&Yes\\
\bottomrule
\end{tabular}
}
    \caption{Benchmark dataset properties and statistics.}
\label{dataset}
\vspace{-5mm}
\end{table}
\addtolength{\tabcolsep}{3pt}

\section*{Appendix B. Ablation study}
We conduct ablation studies of \walkpool by excluding or including only each of the node-, link- and graph-level features in \eqref{eq:wp_features}. We used short notations
$\{\mathtt{node}^{\tau}\}\equiv\{\mathtt{node}^{\tau,+},\mathtt{node}^{\tau,-}:\tau\in\{2,\cdots,\tau_c\}\}$, and similarly for $\{\mathtt{link}^{\tau}\}$, $\{\Delta \mathtt{graph}^{\tau}\}$. The ablation results for the C.ele dataset is shown in the In Table.~\ref{tab:appendixa}. 

\begin{table}[h]
\centering
\ra{1}
\begin{tabular}{@{}lccccc@{}}
\toprule
    Feature $\mathtt{feat}$&  $\omega_{12}$ & $\{\mathtt{node}^{\tau}\}$ &$\{\mathtt{link}^{\tau}\}$& $\{\Delta \mathtt{graph}^{\tau}\}$     & - \\ \midrule
    AUC (using only $\mathtt{feat}$)     &87.90& 91.88&92.44&92.29         & -\\
      
    AUC (using WP$\backslash \mathtt{feat}$)     &92.69            &92.66          & 91.08        & 92.65                &92.82\\
\bottomrule
\end{tabular}
\caption{Ablation study in C.ele}
\label{tab:appendixa}
\end{table}
        


\subsection*{Appendix C. Hyperparamters}
The hyperparamters to reproduce the results are summarized in Table~\ref{tab: appendixb}.

For the split of training, validation and testing edges, we have adopted different setups for datasets with and without node attributes to consist with previous studies. In particular, for datasets without node attributes, $90\%$ of edges are taken as training positive edges and the remaining $10\%$ are for testing positive edges. The corresponding same number of negative edges are sampled randomly for training and testing. Then among the training edges. we randomly selected $5\%$ for validation. The observed graph from which the enclosing subgraphs are sampled consists of all the training positive edges.
For the datasets with node attributes, all the edges are divided into $85\%$ for training, $10\%$ for testing and $5\%$ for validation. The observed graph is built only based on the training edges. Some studies of link prediction choose to keep the observed graph connected when sampling the test edges, while others do not adopt this option. For the experiment results shown in the paper, we sample the test edges uniformly random without ensuring the observed graph is connected.   

For sampling the enclosing subgraphs, we set the number of hops as $2$ except $3$ for the Power dataset. The $2$-hop subgraphs sampled from E.coli and PB datasets and Pubmed are relatively numerous in nodes. Thus we set a maximum number nodes per hop for these two datasets to fit into memory. In particular, for each hop during sampling, we randomly select a maximum of $100$ nodes if the sampled nodes exceeds.

For \walkpool, We set $\tau_c = 7$ for the cutoff of walk length in all experiments. When $\tau_c$ becomes large, the transition probabilities converges to a constant, as the random walk reaches stationary. Therefore, a larger value of $\tau$ introduces little further information. From experiments, we find that introducing a larger $\tau_c$ will not increase the accuracy notably. 

For the classifer, we use a MLP with layer sizes $72-1440-1440-720-72-1$. We use Relu as the activation function for the hidden layers, and sigmoid function for the output layer.

\begin{table}[h]
\centering
\ra{1}
\begin{tabular}{@{}lll@{}}
\toprule
      Name                   &  With attributes & No attributes\\ \midrule
      optimizer              &  Adam &  Adam\\
      learning rate          &  5e-5 &  5e-5\\
      weight decay           &  0 &  0\\
      test ratio             &  0.1 &  0.1\\
      validation ratio       &  0.05 of training edges &  0.05 of all edges\\
      batch size             &  32 &  32\\
      epochs                  &  50 &  50\\
      hops of enclosing subgraph        &  ($*$) 2 &  2\\
      dimension of initial representation $\mathbf{Z}^{(0)}$    &  16 &  32\\
      initial representation $\mathbf{Z}^{(0)}$      &  ones or DL &  unsupervised models\\
      hidden layers of GCN   &  32 &  32\\
      output layers of GCN   &  32 &  32\\
      hidden layers of attention MLP   &  32 &  32\\
      output layers of attention MLP   &  32 &  32\\
      walk length cutoff $\tau_c$  &  7 &  7\\
      heads                  &  2 &  2\\
  
\bottomrule

\end{tabular}
\caption{Default hyperparameters for reproducing the reults. ($*$): 3-hop for the Power dataset.}
\label{tab: appendixb}
\end{table}

\subsection*{Appendix D. Results measured by AP and statistical significance of results}
The prediction accuracy measured by AP for datasets with and without node attributes are shown in Table~\ref{tab:appendixc1} and Table~\ref{tab:appendixc2}, respectively. From the tables, \walkpool still performs best with the AP metric.

\UPDATE{To verified the statistical significance of the results, we performed a two-sided hypothesis test with the null hypothesis that two independent samples (corresponding to the results of WalkPool and the second best algorithm) have identical average. We compute the corresponding $p$-value on a per dataset basis for the eight datasets without node attributes. The results are shown in Table~\ref{tab_p_1}. The second best algorithm is SEAL except in C.ele (where the second best is PR) and Power (where the second-best is SPC). 
Recall that a $p$-value of $0.05$ or less is customarily considered statistically significant. We see that for all but the NS and Router datasets the $p$-value is below $0.05$. For most datasets it is orders of magnitude below. The AUC on the NS dataset is already very close to $100$, thus leaving little space for improvement; for Router it is the large variance of SEAL that gives a $p$-value a bit above $0.05$. Note that even for Router and NS the empirical mean of WalkPool is better. More trials should easily break the statistical tie even in those cases, but we used the same 10 splits as SEAL for a fair comparisons.}

\addtolength{\tabcolsep}{-3pt}
\begin{table}[!]
\centering
\scalebox{0.7}{
\begin{tabular}{@{}lccccccccc@{}}
\toprule
\textbf{Data}          &\textbf{USAir}          &  \textbf{NS}        &  \textbf{PB}        & \textbf{Yeast}         & \textbf{C.ele}         & \textbf{Power}         & \textbf{Router}       & \textbf{E.coli}  \\
\midrule
Second Best       &97.09±0.70 &98.85±0.47 & 95.01±0.34 & 97.91±0.52 & 90.32±1.49 & 91.78±0.61 & 96.38±1.45 & 97.64±0.22\\
WP(DL) &98.68±0.48 & 98.95±0.41 & 95.60±0.37 & 98.37±0.25 & 92.79±1.09 & 92.56±0.60 & 97.27±0.28 &98.58±0.19\\

$p$-value & $2.18 \cdot10^{-5}$  & $6.18 \cdot10^{-1}$
  & $1.60 \cdot10^{-3}$ & $2.56 \cdot10^{-2}$  & $6.00 \cdot10^{-4}$  & $9.90 \cdot10^{-3}$ & $8.68 \cdot10^{-2}$    & $7.80 \cdot10^{-9}$ \\
\bottomrule
\end{tabular}
}
\caption{$p$-value by comparing WP and second best algorithm on eight datasets with no attributes.}
\label{tab_p_1}
\end{table}
\addtolength{\tabcolsep}{3pt}

\addtolength{\tabcolsep}{-3pt}   
\begin{table}[!]
\centering
\ra{1.2}
\scalebox{0.7}{
\begin{tabular}{@{}lccccccccc@{}}
\toprule
\textbf{Data}          &\textbf{USAir}          &  \textbf{NS}        &  \textbf{PB}        & \textbf{Yeast}         & \textbf{C.ele}         & \textbf{Power}         & \textbf{Router}       & \textbf{E.coli}  \\
\midrule
AA   &   95.36±1.00 & 94.46±0.93 & 92.36±0.46 & 89.53±0.63 & 86.46±1.43 & 58.76±0.89 & 56.50±0.51 & 96.05±0.25 \\
Katz  &  94.07±1.18 & 95.05±1.08 & 93.07±0.46 & 95.23±0.39 & 85.93±1.69 & 79.82±0.91 & 64.52±0.81 & 94.83±0.30\\
PR   &   95.08±1.16 & 95.11±1.04 & 92.97±0.77 & 95.47±0.43 & 89.56±1.57 & 80.56±0.91 & 64.91±0.85 & 96.41±0.33\\
WLK  &   96.82±0.84  & 98.79±0.40 & 93.34±0.89 & 96.82±0.35 & 88.96±2.06 & 83.02±3.19 & 86.59±2.23 & 97.25±0.42\\
WLNM  &  95.95±1.13 & 98.81±0.49 & 92.69±0.64 & 96.40±0.38 & 85.08±2.05 & 87.16±0.77 & 93.53±1.09 &97.50±0.23 \\
N2V   &  89.71±2.97 & 94.28±0.91 & 84.79±1.03 & 94.90±0.38 & 83.12±1.90 & 81.49±0.86 & 68.66±1.49& 90.87±1.48 \\
SPC   &  78.07±2.92 & 90.83±2.16 & 86.57±0.61 & 94.63±0.56 & 62.07±2.40 & 91.00±0.58 & 73.53±1.47 & 96.08±0.37\\
MF  &94.36±0.79 &78.41±3.85 &93.56±0.71 &92.01±0.47 &83.63±2.09 &53.50±1.22 &82.59±1.38 &95.59±0.31 \\
LINE  &79.70±11.76 &85.17±1.65 &78.82±2.71 &90.55±2.39 &67.51±2.72 &56.66±1.43 &71.92±1.53 &86.45±1.82 \\
SEAL  & 97.13±0.80  & \second{99.06±0.37} & 94.55±0.43 & 98.33±0.37 & 89.48±1.85 & 89.55±1.29 & 96.23±1.71 &98.03±0.20  \\
\midrule
WP(ones) & \second{98.43±0.66} & 99.04±0.28 & \second{95.00±0.46} & \second{98.52±0.28} & \second{91.14±1.80} & \second{92.45±0.72} & \second{97.08±0.42} & \second{98.74±0.17}\\
WP(DL) & \first{98.66±0.55} & \first{99.09±0.29} & \first{95.28±0.41} & \first{98.64±0.28} & \first{91.53±1.33} & \first{93.07±0.69} & \first{97.20±0.38} & \first{98.79±0.21}\\
\bottomrule
\end{tabular}
}
\caption{Prediction accuracy measured by AP on eight datasets (90\% observed links) without node attributes. Boldface letters are used to mark the best results while underlined letters indicate the second best results.}
\label{tab:appendixc1}
\end{table}
\addtolength{\tabcolsep}{3pt}   

\begin{table}
\centering
\ra{1}
\scalebox{0.75}{
\begin{tabular}{@{}ccccccccc@{}}\toprule
& \multicolumn{2}{c}{VGAE} & \phantom{abc} & \multicolumn{2}{c}{ARGVA}&\phantom{abc} & \multicolumn{2}{c}{GIC}\\
\cmidrule{2-3} \cmidrule{5-6} \cmidrule{8-9}
                    & NO WP     & WP       && NO WP        & WP        && NO WP      & WP \\ \midrule
\textbf{Cora}       & 92.65±0.59&95.11±0.53&& 93.11±1.08  &95.23±0.84  &&93.45±0.48   &\first{95.97±0.57}\\
\textbf{Citeseer}   & 92.28±0.92&94.89±0.89&& 92.69±0.85  &95.04±1.46  &&95.11±0.65   &\first{96.04±0.63}\\
\textbf{Pubmed}     & 96.60±0.13&98.46±0.14&& 96.52±0.20  &98.49±0.14  &&92.32±0.37   &\first{98.65±0.15}\\
\textbf{Chameleon}  & 98.79±0.22&\first{99.50±0.15}&& 98.19±0.28  &99.48±0.15  &&93.34±0.50&99.46±0.15\\
\textbf{Cornell}    & 75.34±8.24& 82.74±7.89&& 84.66±5.48  &\first{85.92±7.45}  &&66.94±8.20&84.18±8.39\\
\textbf{Texas}      & 78.50±8.18& \first{81.78±5.75}&& 73.10±8.26  &81.21±5.00  &&70.89±9.08&79.60±6.34\\
\textbf{Wisconsin}  & 79.56±6.32& 82.50±4.96&& 78.72±6.24  &82.25±8.10 &&78.27±4.78&\first{84.45±4.60}\\
\bottomrule
\end{tabular}
}
\caption{Prediction accuracy measured by AP on seven datasets (90\% observed links) with node attributes.}
\label{tab:appendixc2}
\end{table}

\subsection*{Appendix E. Locality of graph-level features}
Let $G=(\mathcal{V},\mathcal{E})$ be an enclosing subgraph, and $\{1,2\}$ be the candidate link. 
We denote the random walk transition matrices on $G^+$ and $G^-$ as $\mathbf{P}$ and $\mathbf{Q}$, respectively. For any node $x,y\in \mathcal{V}$, let $d(x,y)$ be the shortest path distance on the graph. We define the distance from $x$ to the candidate link to be $\bar{d}(x)=\min \{d(x,1),d(x,2)\}$. We have the following result.
\begin{proposition}
Let $\mathcal{V}^{\tau}=\{x\in\mathcal{V}, \bar{d}(x)>\tau\}$, then $[\mathbf{P}^{\tau}]_{x,y}=[\mathbf{Q}^{\tau}]_{x,y}$ for all $x,y\in \mathcal{V}^{\tau}$.
\end{proposition}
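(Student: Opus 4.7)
The plan is to prove the equality by decomposing $[\mathbf{P}^\tau]_{x,y}$ and $[\mathbf{Q}^\tau]_{x,y}$ into sums over length-$\tau$ walks, and showing that for $x,y\in\mathcal{V}^\tau$ these walks avoid the only places where the two transition matrices can disagree. The key structural observation is that $G^+$ and $G^-$ differ by exactly one edge, $\{1,2\}$, while the node representation $\mathbf{Z}$ is produced once (from the original enclosing subgraph) and reused for both variants. Hence for any node $u\notin\{1,2\}$ the neighborhood $\mathcal{N}(u)$ is identical in the two variants, the attention scores $\omega_{u,v}$ in~\eqref{eq:omega} are identical, and the softmax in~\eqref{eq:encoder} returns the same row: $p_{u,v}=q_{u,v}$ for every $v$. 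Only the rows indexed by $1$ and $2$ can possibly differ.

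I would then verify briefly that $\mathcal{V}^\tau$ is unambiguous, i.e.\ that $\bar{d}$ gives the same value in $G^+$ and $G^-$. Adding the edge $\{1,2\}$ can only shorten distances, and a short case analysis on whether a shortest $x$-to-$1$ path in $G^+$ uses the new edge (in which case it must use it as the very last step, since a shortest path does not revisit vertices) gives $d_{G^+}(x,1)=\min\{d_{G^-}(x,1),\,d_{G^-}(x,2)+1\}$, and symmetrically for $d(x,2)$. Taking the minimum over the two targets collapses these four quantities to $\min\{d_{G^-}(x,1),d_{G^-}(x,2)\}=\bar{d}_{G^-}(x)$, so $\bar{d}_{G^+}=\bar{d}_{G^-}$.

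The core of the argument is a one-line walk-avoidance lemma: for $x\in\mathcal{V}^\tau$, every length-$\tau$ walk $v_0=x,v_1,\ldots,v_\tau$ in $G^+$ or $G^-$ avoids $\{1,2\}$ entirely, because $v_i\in\{1,2\}$ would force $d(x,v_i)\leq i\leq\tau$ and contradict $\bar{d}(x)>\tau$. Applied at both endpoints $x,y\in\mathcal{V}^\tau$, this shows that no length-$\tau$ walk from $x$ to $y$ uses the edge $\{1,2\}$, so the set of such walks in $G^+$ coincides with the set in $G^-$. Along each walk every transition $p_{v_i,v_{i+1}}$ uses a non-$\{1,2\}$ row of the transition matrix and therefore equals $q_{v_i,v_{i+1}}$ by the opening paragraph. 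Summing the equal contributions over the equal sets of walks yields $[\mathbf{P}^\tau]_{x,y}=[\mathbf{Q}^\tau]_{x,y}$.

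The only mild obstacle is the $\bar{d}_{G^+}=\bar{d}_{G^-}$ sanity check (needed only to make $\mathcal{V}^\tau$ well-defined) together with being explicit that $\mathbf{Z}$ is shared between the $G^+$ and $G^-$ computations rather than recomputed from the modified adjacency. With $\mathbf{Z}$ shared as in the \walkpool algorithm, everything else is an essentially tautological consequence of the locality of the softmax normalization and the fact that a walk of length $\tau$ cannot reach nodes whose graph distance from the starting point exceeds $\tau$.
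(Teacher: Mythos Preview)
Your argument is correct and takes a genuinely different route from the paper. The paper proves the statement by induction on $\tau$: it partitions the vertex set into $\mathcal{I}=\{x:\bar d(x)\le\tau\}$, its boundary $\partial\mathcal{I}$, and the remainder $\mathcal{I}^c=\mathcal{V}^{\tau+1}$, writes $\mathbf{P}$ and $\mathbf{P}^\tau$ in block form, and obtains $\mathbf{P}^{\tau+1}_{\mathcal{I}^c,\mathcal{I}^c}$ from a block product whose factors are matched to $\mathbf{Q}$ by the induction hypothesis and the base case. You instead expand $[\mathbf{P}^\tau]_{x,y}$ as a sum over length-$\tau$ walks and argue directly that (i) the rows of $\mathbf{P}$ and $\mathbf{Q}$ agree off $\{1,2\}$ because $\mathbf{Z}$ is shared and the neighborhoods coincide, and (ii) no length-$\tau$ walk from $x\in\mathcal{V}^\tau$ can visit $\{1,2\}$ by a distance bound. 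Your approach is more elementary and makes the locality mechanism explicit, whereas the paper's block-matrix induction hides the walk-avoidance inside the vanishing off-diagonal blocks. You also supply two clarifications the paper omits: that $\bar d$ is the same in $G^+$ and $G^-$ (so $\mathcal{V}^\tau$ is unambiguous), and that the equality of rows hinges on $\mathbf{Z}$ being computed once and reused.
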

\begin{proof}
We prove the result via induction. When $\tau = 1$, after the node-wise normalization from the softmax function in~\eqref{eq:encoder}, the elements of $\mathbf{P}$ and $\mathbf{Q}$ are identical among nodes that not neighbors of the candidate link, i.e., $\mathcal{V}^{1}=\{x\in\mathcal{V}, \bar{d}(x)>1\}$.

Now suppose the claim holds for $\tau$. Let $\mathcal{I}=\{x\in\mathcal{V}, \bar{d}(x)\leq \tau\}$, let $\partial \mathcal{I}$ be the set of nodes that are adjacency to but not in $\mathcal{I}$, and let $\mathcal{I}^c$ be the set of rest nodes in the graph.  We can arrange the order of the nodes that such that $\mathbf{P}$  and $\mathbf{P}^{\tau}$ are of the following block form
\begin{equation}
\mathbf{P} = \begin{pmatrix}
\mathbf{P}_{\mathcal{I},\mathcal{I}} & \mathbf{P}_{\mathcal{I},\mathcal{\partial I}} & \mathbf{0}\\
\mathbf{P}_{\partial\mathcal{I},\mathcal{I}} & \mathbf{P}_{\partial\mathcal{I},\mathcal{\partial I}} & \mathbf{P}_{\partial\mathcal{I},\mathcal{I}^c}\\
\mathbf{0} & \mathbf{P}_{\mathcal{I}^c,\mathcal{\partial I}} & \mathbf{P}_{\mathcal{I}^c,\mathcal{I}^c}\\
\end{pmatrix},\qquad
\mathbf{P}^{\tau} = \begin{pmatrix}
\mathbf{P}^{\tau}_{\mathcal{I},\mathcal{I}} & \mathbf{P}^{\tau}_{\mathcal{I},\mathcal{\partial I}} & \mathbf{P}^{\tau}_{\mathcal{I},\mathcal{I}^c}\\
\mathbf{P}^{\tau}_{\partial\mathcal{I},\mathcal{I}} & \mathbf{P}^{\tau}_{\partial\mathcal{I},\mathcal{\partial I}} & \mathbf{P}^{\tau}_{\partial\mathcal{I},\mathcal{I}^c}\\
\mathbf{P}^{\tau}_{\mathcal{I}^c,\mathcal{I}} & \mathbf{P}^{\tau}_{\mathcal{I}^c,\mathcal{\partial I}} & \mathbf{P}^{\tau}_{\mathcal{I}^c,\mathcal{I}^c},\\
\end{pmatrix}
\end{equation}
where $\mathbf{P}_{\mathcal{I},\mathcal{I}}$ is the block matrix confined to $\mathcal{I}$ and other blocks are defined similarly. By definition, $\mathbf{P}_{\mathcal{I},\mathcal{I}^c}=\mathbf{0}$ and $\mathbf{P}_{\mathcal{I}^c,\mathcal{I}}=\mathbf{0}$. Multiplication of the block matrices gives
\begin{equation}
\mathbf{P}^{\tau+1}_{\mathcal{I}^c,\mathcal{I}^c} = \mathbf{P}_{\mathcal{I}^c,\mathcal{\partial I}}\mathbf{P}^{\tau}_{\partial\mathcal{I},\mathcal{I}^c} + \mathbf{P}_{\mathcal{I}^c,\mathcal{I}^c}\mathbf{P}^{\tau}_{\mathcal{I}^c,\mathcal{I}^c}.
\end{equation}
We can do a similar computation to obtain $\mathbf{Q}^{\tau+1}_{\mathcal{I}^c,\mathcal{I}^c}$, with all $\mathbf{P}$ replaced by $\mathbb{Q}$ in the above equation. By assumption, we have
\begin{equation}
\mathbf{P}_{\mathcal{I}^c \mathcal{\partial I}}=\mathbf{Q}_{\mathcal{I}^c \mathcal{\partial I}} \qquad \mathbf{P}^{\tau}_{\partial\mathcal{I},\mathcal{I}^c}=\mathbf{Q}^{\tau}_{\partial\mathcal{I},\mathcal{I}^c}\qquad \mathbf{P}_{\mathcal{I}^c,\mathcal{I}^c}=\mathbf{Q}_{\mathcal{I}^c,\mathcal{I}^c},\qquad \mathbf{P}^{\tau}_{\mathcal{I}^c,\mathcal{I}^c}=\mathbf{Q}^{\tau}_{\mathcal{I}^c,\mathcal{I}^c},
\end{equation}
as $\partial\mathcal{I},\mathcal{I}^c\in \mathcal{V}^{\tau}$, therefore we obtain $\mathbf{P}^{\tau+1}_{\mathcal{I}^c,\mathcal{I}^c}=\mathbf{Q}^{\tau+1}_{\mathcal{I}^c,\mathcal{I}^c}$. As $\mathcal{I}^c = \mathcal{V}^{\tau+1}$, the claim follows by induction.
\end{proof}

\subsection*{Appendix F. Results with 50$\%$ observation}
We further test of the performance of \walkpool under the setup of sparse training set. In particular, we keep only $50\%$ positive links of the graphs for training and use the rest links for testing. The same number of negative links are sampled randomly for the traning set and test set. The results measure by AUC and AP are shown in Table~\ref{tab:50_8AUC} and \ref{tab:50_8AP}, respectively. 

\addtolength{\tabcolsep}{-3pt}
\begin{table}[!]
\centering
\ra{1.2}
\scalebox{0.7}{
\begin{tabular}{@{}lccccccccc@{}}
\toprule
\textbf{Data}          &\textbf{USAir}          &  \textbf{NS}        &  \textbf{PB}        & \textbf{Yeast}         & \textbf{C.ele}         & \textbf{Power}         & \textbf{Router}       & \textbf{E.coli}  \\
\midrule
AA       &88.61±0.40 &77.13±0.75 &87.06±0.17 &82.63±0.27 &73.37±0.80 &53.38±0.22 &52.94±0.28 &87.66±0.56\\
Katz     &88.91±0.51 &82.30±0.93 &91.25±0.22 &88.87±0.28 &79.99±0.59 &57.34±0.51 &54.39±0.38 &89.81±0.46\\
PR       &90.57±0.62 &82.32±0.94 &92.23±0.21 &89.35±0.29 &84.95±0.58 &57.34±0.52 &54.44±0.38 &92.96±0.43\\
WLK      &91.93±0.71 &87.27±1.71 &92.54±0.33 &91.15±0.35 &83.29±0.89 &63.44±1.29 &71.25±4.37 &92.38±0.46\\
WLNM     &91.42±0.95 &87.61±1.63 &90.93±0.23 &92.22±0.32 &75.72±1.33 &64.09±0.76 &86.10±0.52 &92.81±0.30\\
N2V      &84.63±1.58 &80.29±1.20 &79.29±0.67 &90.18±0.17 &75.53±1.23 &55.40±0.84 &62.45±0.81 &84.73±0.81\\
SPC      &65.42±3.41 &79.63±1.34 &78.06±1.00 &89.73±0.28 &47.30±0.91 &56.51±0.94 &53.87±1.33 &92.00±0.50\\
MF       &91.28±0.71 & 62.95±1.03 &93.27±0.16 &84.99±0.49  &78.49±1.73 &50.53±0.60 &77.49±0.64  & 91.75±0.33 \\
LINE    &72.51±12.19 &65.96±1.60 &75.53±1.78  &79.44±7.90 &59.46±7.08 &53.44±1.83  &62.43±3.10  & 74.50±11.10 \\
SEAL &93.36±0.67 &\second{90.88±1.18} &93.79±0.25 &93.90±0.54 &82.33±2.31 &65.84±1.10 &86.64±1.58 &94.18±0.41\\
\midrule
WP(ones) &\second{95.16±0.70} &90.68±1.04 &\second{94.50±0.20} &\second{94.89±0.22} &\first{87.83±0.83} &\second{67.03±0.77} &\second{88.09±0.52} &\first{95.37±0.22}\\
WP(DL) &\first{95.50±0.74} &\first{90.97±0.96} &\first{94.57±0.16} &\first{95.00±0.21} &\second{87.62±1.39} &\first{67.72±0.86} &\first{88.13±0.61}  &\second{95.33±0.30}\\
\bottomrule
\end{tabular}
}
\caption{Prediction accuracy measured by AUC on eight datasets (50\% observed links) without node attributes.  Boldface letters are used to mark the best results while underlined letters indicate the second best results.}
\label{tab:50_8AUC}
\end{table}
\addtolength{\tabcolsep}{3pt}

\addtolength{\tabcolsep}{-3pt}
\begin{table}[!]
\centering
\ra{1.2}
\scalebox{0.7}{
\begin{tabular}{@{}lcccccccc@{}}
\toprule
\textbf{Data}          &\textbf{USAir}          &  \textbf{NS}        &  \textbf{PB}        & \textbf{Yeast}         & \textbf{C.ele}         & \textbf{Power}         & \textbf{Router}       & \textbf{E.coli}  \\
\midrule
AA       &89.39±0.39 &77.14±0.74 &87.24±0.18 &82.68±0.27 &73.40±0.77 &53.37±0.23 &52.94±0.27 &89.01±0.49\\
Katz     &91.29±0.36 &82.69±0.88 &91.54±0.16 &92.22±0.21 &79.94±0.79 &57.63±0.52 &60.87±0.26 &91.93±0.35\\
PR       &91.93±0.50 &82.73±0.90 &91.92±0.25 &92.54±0.23 &84.15±0.86 &57.61±0.56 &61.01±0.30 &94.68±0.28\\
WLK      &93.34±0.51 &89.97±1.02 &92.34±0.34 &93.55±0.46 &83.20±0.90 &63.97±1.81 &75.49±3.43 &94.51±0.32\\
WLNM     &92.54±0.81 &90.10±1.11 &91.01±0.20 &93.93±0.20 &76.12±1.08 &66.43±0.85 &86.12±0.68 &94.47±0.21\\
N2V      &82.51±2.08 &86.01±0.87 &77.21±0.97 &92.45±0.23 &72.91±1.74 &60.83±0.68 &66.77±0.57 &85.41±0.94\\
SPC       &70.18±2.16 &81.16±1.26 &81.30±0.84 &92.07±0.27 &55.31±0.93 &59.10±1.06 &59.13±3.22 &94.14±0.29\\
MF       &92.33±0.90 & 66.62±0.89 &92.53±0.33 &87.28±0.57  &77.82±1.59 &52.45±0.63 &81.25±0.56  & 94.04±0.36 \\
LINE    &71.75±11.85 &71.53±0.97 &78.72±1.24  &83.06±9.70 &60.71±6.26 &55.11±3.49  &64.87±6.76  &75.98±14.45 \\
SEAL &94.15±0.54 &\second{92.21±0.97} &93.42±0.19  &95.32±0.38 &81.99±2.18 &65.28±1.25 &87.79±1.71 &95.67±0.24\\
\midrule
WP(ones) &\second{95.39±0.73} &92.15±0.81 &\second{94.14±0.27} &96.04±0.16 &\first{86.49±0.97} &\second{69.26±0.64} &\first{89.21±0.44} &\second{96.35±0.24}\\
WP(DL) &\first{95.87±0.74} &\first{92.33±0.76} &\first{94.22±0.27} &\first{96.15±0.13} &\second{86.25±1.42} &\first{69.79±0.71} &\second{89.17±0.55} &\first{96.36±0.34}\\
\bottomrule
\end{tabular}
}
\caption{Prediction accuracy measured by AP on eight datasets (50\% observed links) without node attributes. Boldface letters are used to mark the best results while underlined letters indicate the second best results.}
\label{tab:50_8AP}
\end{table}
\addtolength{\tabcolsep}{3pt}

\subsection*{Appendix G. MSE loss and BCE loss}
As link prediction is a classification problem, we usually adopt a classification loss such as binary cross-entropy (BCE).
We opted for MSE based on the following heuristic. In node classification, categories are usually clearly defined. For example, in a citation graph, the category of a paper is near-definite. Meanwhile, the topology of real graphs is often fuzzy and evolving. In a co-authorship graph, some authors who have not published together today may have submitted a paper that will come out tomorrow. We therefore expect a less peaky distribution of link probabilities than node classes, and we choose a loss that minimizes the miscalibration of the model. In this context where we want to predict probabilities as accurately as possible, the MSE loss is known as the Brier score (another textbook use of the Brier score is to calibrate the chance-of-rain forecasts).

We have experimented with both the binary cross-entropy (BCE) loss and the MSE loss, and we observed no discernible difference. For example, for the eight datasets without node attributes (the last line of Table~\ref{tab1}), accuracies measured by AUC when using MSE/BCE loss are:  98.68/98.68;  98.95/98.85; 95.60/95.69; 98.37/98.37; 92.79/92.83; 92.56/92.58; 97.27/97.35; 98.58/98.67. 

\subsection*{Appendix H. Runtime of \walkpool}
The runtime of WalkPool is similar to that of SEAL when using the same number of hops to construct the subgraphs. On a classical dataset \textbf{USAir}, \walkpool takes 129.62s for 50 epochs with 1-hop subgraph sampling, while SEAL takes 145.94s. The configuration of \walkpool used throughout our paper has two GCN layers followed by several linear layers for computing the powers of matrix $\mathbf{P}$, and a four-layer MLP classifier. As such, from the perspective of runtime, the trainable part of WalkPool architecture is no more complex than that of a typical GNN. For very large datasets, the most time-consuming step is to extract the enclosing subgraphs---this is true for WalkPool and any other subgraph-based link prediction algorithm.
\end{document}